\DeclareMathOperator*{\argmin}{arg\,min}
\newcommand{\rev}[1]{#1}
\begin{document}
\title{On minimizers and convolutional filters: theoretical connections and applications to genome analysis\thanks{Supported by Natural Sciences and Engineering
Research Council of Canada (NSERC) grant RGPIN-2022-03074.}}

\titlerunning{On minimizers and convolutional filters}
%
\author{Yun William Yu\inst{1, 2}\orcidID{0000-0002-8275-9576}}
\authorrunning{Y. W. Yu}
%
\institute{Department of Mathematics, University of Toronto, Toronto, Ontario, M5S 1A1, Canada \and
Ray and Stephanie Lane Computational Biology Department, Carnegie Mellon University, Pittsburgh, PA 15213, USA \\
\email{ywyu@cmu.edu}}
\maketitle              
\begin{abstract}
Minimizers and convolutional neural networks (CNNs) are two quite distinct popular techniques that have both been employed to analyze categorical biological sequences. At face value, the methods seem entirely dissimilar. Minimizers use min-wise hashing on a rolling window to extract a single important k-mer feature per window. CNNs start with a wide array of randomly initialized convolutional filters, paired with a pooling operation, and then multiple additional neural layers to learn both the filters themselves and how they can be used to classify the sequence.

Here, our main result is a careful mathematical analysis of hash function properties showing that for sequences over a categorical alphabet, random Gaussian initialization of convolutional filters with max-pooling is equivalent to choosing a minimizer ordering such that selected k-mers are (in Hamming distance) far from the k-mers within the sequence but close to other minimizers.
In empirical experiments, we find that this property manifests as decreased density in repetitive regions, both in simulation and on real human telomeres.
We additionally train from scratch a CNN embedding of synthetic short-reads from the SARS-CoV-2 genome into 3D Euclidean space that locally recapitulates the linear sequence distance of the read origins, a modest step towards building a deep learning assembler, though it is at present too slow to be practical.
In total, this manuscript provides a partial explanation for the effectiveness of CNNs in categorical sequence analysis.


\keywords{Minimizers  \and CNNs \and Hashing \and Assembly graph}
\end{abstract}

\newpage

\section{Introduction}

It is a pithy statement now that the near exponential explosion of biological sequence data we confront requires the construction of more efficient tailored algorithms capable of handling the data deluge \cite{kahn2011future,berger2016computational}.
Over the last couple decades, local k-mer subsampling schemes such as minimizers \cite{schleimer2003winnowing,roberts2004reducing}, syncmers \cite{edgar2021syncmers}, and minimally-overlapping words \cite{frith2020minimally} have been applied to reduce the redundancy found in the overlapping neighboring k-mers of a sequence, and thus allow speeding up tasks like taxonomic classification \cite{wood2019improved}, Average Nucleotide Identity estimation \cite{shaw2023fast}, read mapping \cite{li2016minimap}, or assembly \cite{ekim2021minimizer}.
These methods have become necessary because although Moore's law on transistor density has continued unabated, the same cannot be said about single-threaded processing times or fast memory access.

However, it bears remarking that even as traditional genomics analysis has turned to increasingly efficient algorithmics, the burgeoning subarea of deep learning in biology has exploded \cite{angermueller2016deep,fiannaca2018deep}, and the neural networks being used for the same tasks (such as metagenomic classification) are by comparison much less computationally efficient.
Instead, deep learning is able to continue to tap into Moore's law because its computational primitive of numerical linear algebra is easily vectorized onto hardware accelerators such as GPUs (graphics processing units) and TPUs (tensor processing units) \cite{lu2010k3,jouppi2017datacenter}.
On bioinformatics tasks that map directly onto traditional ML tasks---e.g. classification and prediction of images---those models often generally outperform traditional methods on accuracy \cite{deepak2019brain,liu2018enhancing}.
Furthermore, deep learning methods are often able to achieve comparable accuracy even on core computational biology tasks like variant calling \cite{poplin2018universal} and metagenomic binning \cite{yoshimura2021genomic}, but at heavy computational cost.
Thus, choosing the right analysis methodology requires carefully navigating the relevant trade-offs \cite{berger2022navigating}.

However, although in theory with unbounded compute, neural networks can learn any function \cite{hornik1989multilayer}, a lot of work has been done to create more efficient architectures, including convolutional neural nets (CNNs) \cite{fukushima1982neocognitron}, recurrent neural nets (RNNs) \cite{rumelhart1986learning}, long-short-term memory structures (LSTM) \cite{hochreiter1997long}, Transformer networks \cite{vaswani2017attention}, and more.
However, although some ML models have an intuitive justification, often, they are simply found to perform well empirically, and it is difficult to interpret why they work, or what the learned weights might mean biologically \cite{montavon2018methods}.

\begin{figure}[t]
    \centering
	\includegraphics[width=0.49\columnwidth]{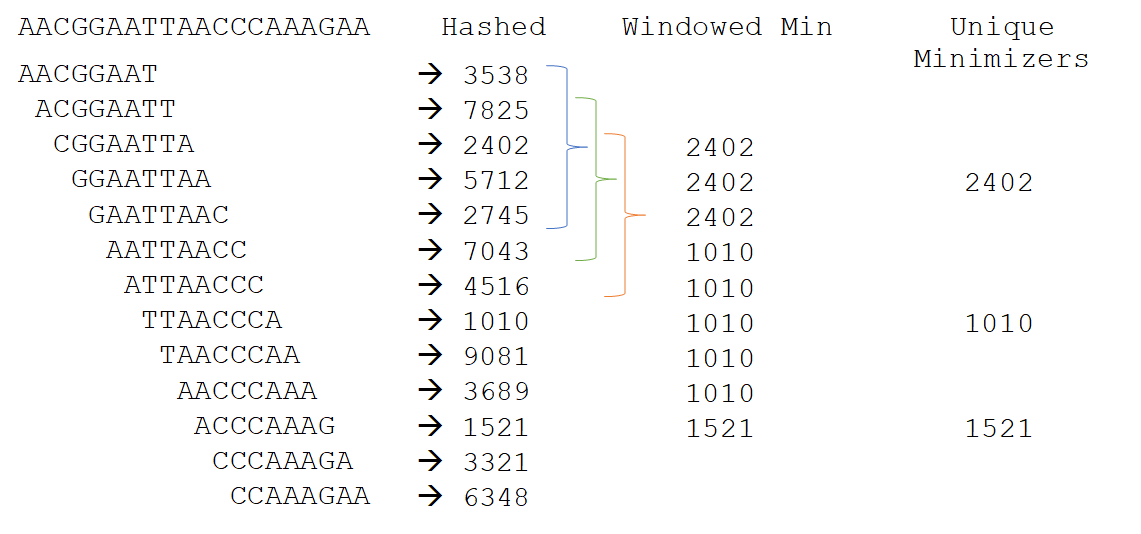}
	\vrule
	\hfill
	\includegraphics[width=0.49\columnwidth]{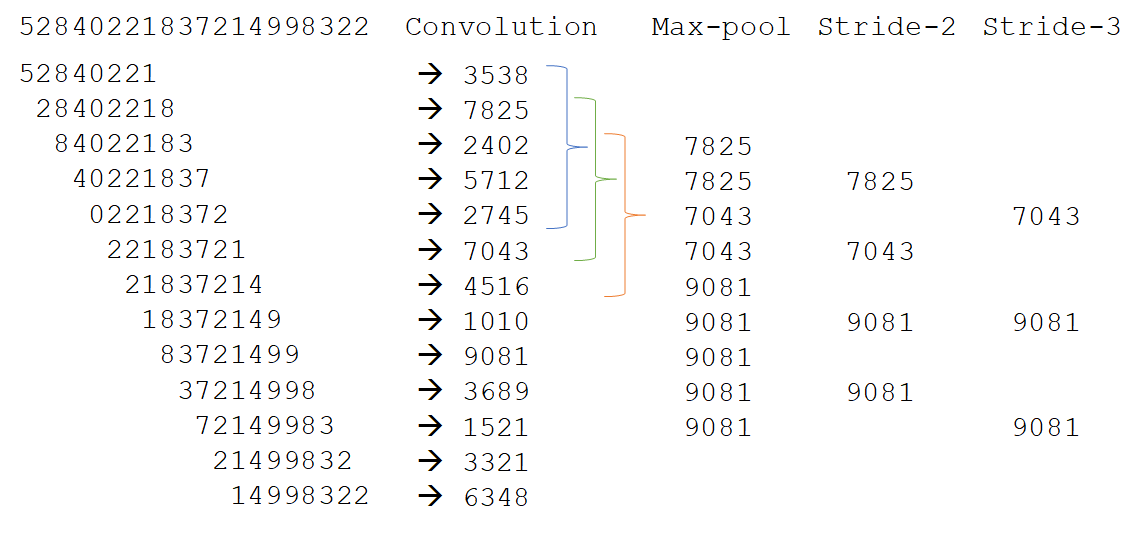}
		\vspace{-1.5em}
		\caption{\textbf{(left)} \textit{Construction of minimizers in bioinformatics algorithms}. A string of nucleotides is separated into overlapping k-mers ($k=8$ in figure). Those k-mers are hashed---typically randomly hashed---to integers, and the minimum integer within each overlapping window (length 5 in figure) is computed, called the `minimizer'. Only unique minimizers are stored, giving a sparse representation. 		
		\textbf{(right)} \textit{Applying a 1D convolutional filter with max-pooling}. A vector of values is convolved with a weights vector (here of length 8) to determine the output of a single filter. A max-pooling operation takes the maximums in some window (window size here is 5), and sparsification can be achieved by increasing stride-length. These maximums are features for the remaining layers of a CNN.}\label{fig:cnn}		
    \label{fig:my_label}
    \vspace{-1.5em}
\end{figure}

In this manuscript, we focus on understanding the efficacy of CNNs on categorical sequence data.
While interpretations of a CNNs weights and neurons are often made by way of comparison to the receptive fields of visual neurons, that interpretation falls apart when applied to biological sequence data. Unlike the brightness of a pixel in an image, biological sequences are not ordinal, but rather instead categorical vectors---a nucleotide is either present or not in a particular position, without any sense of relative magnitude.
In images, for example, a filter may detect a horizontal edge, which is characterized by bright pixels on one side and dark on the other; if you slightly brighten the pixels on one side, or add some noise to the image, the edge is still characterized by that pattern, which filters are trained to pick up.
However, while you could imagine patterns of nucleotides that correspond to an edge, e.g. AAAATTTT, there are no intermediate `levels' for a CNN filter to pick up.
Still, CNNs have proved effective on genomic data, such as in the classification of metagenomic sequences of bacteria by taxonomy \cite{fiannaca2018deep}.
It thus seems that CNNs are not intuitively suited for categorical sequence classification, but empirically, they irrespectively have excellent performance. Why?

Here, we prove two technical Theorems \ref{thm:math} and \ref{thm:math2} showing that the hash family of dot products with random spherical Gaussian multivariates preferentially selects more extremal elements as maximums, but uniformly randomly selects among equally distinct elements, and that maximums are more likely to be similar to each other.
A immediate corollary is then our Main Theorem \ref{thm:main}, showing that on categorical sequences, minimizers and untrained randomly initialized convolutional filters with max-pooling (Figure \ref{fig:cnn}) are nearly equivalent.


We then show in empirical simulations that these properties manifest as decreased \textit{density} in highly repetitive sequences as compared to random minimizer schemes, both in simulation and on real human telomeres.
Although at least one minimizer is guaranteed to be selected from each window, the number of minimizers that are selected overall determines the density; lowering the density can improve the efficiency of minimizer-based algorithms, as has been a design goal in the minimizer literature \cite{marccais2017improving,zheng2020improved}.
Our results show that Gaussian convolutions are better for density than random minimizers, though special-purpose methods such as Miniception\cite{zheng2020improved} still do better than Gaussian convolutions.

Lastly, we demonstrate application of these ideas to genome assembly.
To our knowledge, there exist no deep learning assemblers---a few studies apply deep learning to consensus-sequence determination \cite{vrvcek2022learning,padovani2019machine}, but nothing for generation of the assembly graph itself---which would seem to suggest that these problems are not amenable to ML.
We take inspiration from the convolutional filter/minimizer equivalence to train a neural embedding of synthetic short-reads from the SARS-COV-2 genome into 3D Euclidean space that respects linear distance along the genome of the read origins.
Although this is not a full assembler, it does show that at least in principle, the main task of assembly is amenable to deep learning, even in a fully de novo manner, though as might be expected, the computational cost of doing so is likely currently prohibitive.
Together, our theory and experiments resolve connect together two seemingly unrelated methods that have both achieved powerful results in computational biology, explaining some of the advantages of using random Gaussian convolutions for feature selection, and hinting at novel ways of combining deep learning with more traditional computational biology algorithms.

\section{Building blocks}


\subsection{Hashing and permutations}
The key to our analysis will be in understanding the role of hashing \cite{carter1979universal} and permutations. We recall some basic concepts here.
For reference, see any standard text
or review \cite{thorup2015high}.

\begin{definition}
A hash function $h:U\to M$ maps elements from some universe $U$ to some range $M$. 
\end{definition}
\begin{definition}
A random hash function $h:U\to M$ is a function that is uniformly drawn from a family $\mathcal{H}$ of hash functions. 
\end{definition}
To illustrate, let's start with a classical example.
Let $\mathbb{F}_p$ be the finite field with $p$ elements, for some prime $p$.
Then we can define a hash function $h_{a,b}: \mathbb{F}_p \to \mathbb{F}_p$ given by $h_{a,b}(x) = ax + b \mod p$.
The hash family here is parameterized by $a, b\in \mathbb{F}_p$, and so our analysis can operate on probabilities and expectations that arise from treating $a, b$, and therefore $h_{a,b}$, as random variables.

Another more sophisticated modern example is the vector multiply-shift family of hash functions, introduced by Dietzfelbinger, designed to hash vectors to scalars without making use of finite field arithmetic \cite{dietzfelbinger2018universal}.
We introduce it here because the construction is mathematically equivalent to a dot product with uniform random integers in the integer ring, followed by an integer division/bit-shift operation.
Let $U = [2^w]^d$ and $M = [2^l]$, where $[o]$ is the integer ring $\{0, 1, \ldots, o-1\}$.
We will need $\bar{w}\ge w+l-1$, and then we pick a uniform random vector $\mathbf{a} \in [2^{\bar{w}}]^d$, as well as a uniform random element $b \in [2^{\bar{w}}]$.
We define a hash function $h_{\mathbf{\mathbf{a}, b}}: U \to M$
given by $h_{\mathbf{\mathbf{a}, b}} (\mathbf{x}) = (\mathbf{x} \cdot \mathbf{a} + b) \operatorname{div} 2^{\bar{w}-l}$,
where the dot product and addition are in the ring $[2^{\bar{w}}]$,
and $\operatorname{div}$ is ordinary integer division without remainder (i.e. a bit-shift to the right by $\bar{w}-l$ bits), leaving an answer that is just $l$ bits.
In practice, using fast 64-bit integer arithmetic, we can let $l=32$, $w=32$, and $\bar{w}=64$, allowing us to hash vectors of unsigned 32-bit integers by doing a dot product with a random 64-bit integer vector, adding another random 64-bit integer, and then taking the higher 32-bits.

Depending on the application at hand, different properties of a hash function may be desirable, such as universality \cite{carter1979universal}, strong universality, k-independence \cite{wegman1979new}, or min-wise independence \cite{broder2000min}.
Both of the examples given above exhibit strong universality (also known as 2-independence).
\begin{definition}
$\mathcal{H}$ is a strongly universal (or 2-wise independent) hash family if $\forall i_1\ne i_2 \in U$, and $\forall j_1, j_2 \in M$,
\begin{equation}
    \Pr_{h \sim H} \left( h(i_1)=j_1 \wedge h(i_2)=j_2 \right) = \frac{1}{|M|^2}
\end{equation}
\end{definition}
Roughly speaking, any two hash values can be construed as independent uniform random variables on the range.

However, in this work, we are primarily interested in permutations.
Nontrivial permutations are expensive to encode, but they can be approximated via random hashing.
This is the basis of the celebrated MinHash algorithm for computing set similarity \cite{broder1997resemblance}.
Note that for any set $S \subset U$ with no collisions under a hash function $h$---i.e. $h(s_i)\ne h(s_j)$ for any $s_i \ne s_j \in S$---the hash function defines an ordering/permutation on $S$.
One property of random permutations people have sought to capture is that every item in $S$ has an equal chance of being the smallest, which we can formalize.
\begin{definition}
A family of hash functions $\mathcal{H}$ is min-wise independent if for any set $S\subseteq U$, and any $s \in S$,\label{def:minwise}
\begin{equation}
    \Pr_{h \sim H} \left( \min \{ h(S) \} = h(s) \right) = \frac{1}{|S|}
\end{equation}
\end{definition}
Min-wise independence unfortunately does not follow as a consequence of strong universality, though you can approximate it with sufficiently high degrees of k-independence \cite{indyk2001small} or by using a twisted variant of tabulation hashing \cite{dahlgaard2014approximately}.

Alternately, many real-world implementations eschew the formal guarantees of random hashing, and instead just use a deterministic hash function---such as Murmurhash3 \cite{appleby2008murmurhash}, the SHA family of cryptographic hash functions \cite{eastlake2001us}, or efficient canonical choices of group generators in prime fields.
This is very common in bioinformatics software making use of minimizers. However, such constructions obscure the connections we will be drawing in this paper.

\subsection{Minimizers}
\label{ss:minimizers}
Minimizers \cite{schleimer2003winnowing,roberts2004reducing} are a local k-mer selection scheme, in some ways, the classic k-mer selection scheme.
The most important feature of a local k-mer selection scheme is translation invariance; we want to subsample the set of k-mers in a sequence in such a way that even if we insert or delete a letter at the beginning of the sequence, the set of minimizers does not change very much.
There are a number of more modern k-mer selection schemes \cite{edgar2021syncmers,frith2020minimally,shaw2021theory}, with slightly different properties, but minimizers were among the first used in computational biology.

Minimizers are related to min-wise hashing \cite{broder1997resemblance}, but instead of getting features for an entire sequence at once, they break up a sequence into smaller windows and get the minimum hash within each window.
That minimum k-mer, a \textit{minimizer}, is used to match the sequence to some reference, e.g. for sequence assembly \cite{ekim2021minimizer}, comparison \cite{shaw2023fast}, or mapping \cite{li2016minimap}. Often, for longer sequences, we match the sequence to a reference only if multiple minimizers from different windows match.
The compressive nature of minimizers appears because most of the time, the minimizer remains constant as the window rolls, so the total number of minimizers for a sequence is much smaller than the total number of windows.
Indeed, one of the key metrics considered for minimizer schemes is the \textit{density}, defined as the fraction of all k-mers in a sequence that are selected.


Let's more precisely state a few standard results. Consider an alphabet $\Sigma$. For genomics, $\Sigma = \{A, C, G, T\}$, whereas for protein sequences $|\Sigma|=20$ in the standard amino acid alphabet.
We are interested in analyzing the set of variable-length strings $\Sigma^* = \Sigma^1 \cup \Sigma^2 \cup \Sigma^3 \cup \cdots$.
Given a length-$l$ string $x = x_0\cdots x_{l-1} \in \Sigma^l$, where each $x_i \in \Sigma$, one common analytical technique is to consider all length-k substrings $\{k_0, \ldots, k_{l-k}\}$, where $k_i = x_i\cdots x_{i+k-1} \in \Sigma^k$.
Given a hash function $h$,
we define the minimizers of the sequence as $\{w_0, \ldots, w_{l-k-w} \}$, where
\begin{equation}
w_i = \argmin_{k_j \in \{k_i, \ldots, k_{i+w-1} \} } h(k_j) .
\end{equation}


\begin{lemma}[\cite{schleimer2003winnowing}] Let $l>w$, and let $x \in \Sigma^l$ be any string of length $l$ without duplicate k-mers. Then using the notation given above, if $h$ is a min-wise independent hash function, the density of random minimizers is $\frac{2}{w+1}$. 
\end{lemma}
\begin{proof}
There are $w+1$ unique k-mers between two adjacent windows. The very first k-mer belongs only to the earlier window, and the last k-mer belongs only to the later window. When iterating across windows, a new minimizer is selected precisely when either the new (last) k-mer is the smallest (and thus the new minimizer) or when the first k-mer is the smallest (so it was the previous minimizer, and a new minimizer needs to be selected). Thus, on iterating to each new window, the probability of selecting a new minimizer is $\frac{2}{w+1}$ because of min-wise independence.
\rev{\qed}
\end{proof}


Thus, the deduplicated set of minimizers of a sequence is much fewer than $l-k-w$.
However, importantly, although we used min-wise independence above, and that property is crucial for the sister algorithm of MinHash \cite{broder1997resemblance}, we do not actually need the hash function to be min-wise independent.
This is because we simply \rev{are} using the minimizers as a sparse sampling of the k-mer space, and we do not need it to be exactly the density computed above.
Minimizers only require translation invariance, which is satisfied by any arbitrary permutation, rather than needing a random permutation.
Indeed, this fact has been exploited to construct non-uniformly-random minimizers that are more evenly distributed, or that are likely to be rarer in a genome \cite{zheng2020improved,jain2020weighted}.
These works can reduce the density from the $2/(w+1)$ of random minimizers \cite{schleimer2003winnowing} to for example $1.67/(w+1)$ using the Miniception construction scheme \cite{zheng2020improved}.

Of course, some amount of randomness is important; in the worst case, an adversarially chosen permutation that orders k-mers in the same order as appears in the sequence gives no amount of subsampling.
Or, more realistically, simply taking a lexicographic ordering on the space of k-mers works quite badly, for example, on poly-A strings where the repetitiveness of the poly-A prefix causes many distinct k-mers to be minimizers of adjacent windows, increasing density, which is bad.
One of the main empirical results of this manuscript is that for minimizer schemes based on random Gaussian convolutional filters, repetitiveness actually helps decrease density, improving rather than hurting performance.

\subsection{CNNs}
\label{ss:cnns}
CNNs are inspired by the visual cortex of mammals, and were notably demonstrated to be effective for image processing \cite{hubel1959receptive,fukushima1982neocognitron}.
CNNs are characterized by applying a set of filters in a translation-invariant noise-robust way to different parts of an image to generate a set of features, adding a pooling layer to reduce the information passed downstream, and then following up with a feed-forward neural network for analyzing the features and connecting them to a classification or prediction \cite{lawrence1997face}. \rev{Intuitively, each filter looks for a particular type of feature (e.g. an edge or corner) at every position of the image.}
Of course, this is a gross oversimplification, and modern architectures are much more complex, but this captures the basic idea.

For our analysis, we consider a simple 1D-CNN on a 4-letter alphabet $\Sigma$ with an initial convolutional layer with a single filter of size $k$ and stride-1, a max-pooling layer with patch-size $w$ and stride-1, followed up by an arbitrary feed-forward neural network---consider a multi-layer perceptron for simplicity, but it is irrelevant for our analysis---and initialized with Gaussian random weights with mean 0 and variance 1.
The results that follow generalize naturally to other finite alphabets, multiple filters, and multiple dimensions, but the analysis will be easier in this setup.

More formally, let $\mathbf{s} = [s_0, \ldots, s_{l-l}] \in \Sigma^l$ be a length-$l$ string.
Encode the string $\mathbf{s}$ in a one-hot encoding $\mathbf{e}=\mathbf{e(s)}=[e_0, \ldots, e_{4l-1}] \in \{0, 1\}^{4l}$, \rev{where $e_0=1$ iff $s[0]=$A, $e_1=1$ iff $s[0]$=C, $e_2=1$ iff $s[0]=$G, $e_3=1$ iff $s[0]=$T, $e_4=1$ iff $s[1]=$A, etc}.
Our convolutional filter $\mathbf{g}$ is initialized as a spherical Gaussian multivariate vector with i.i.d. components of length $4k$, $\mathbf{g} = [g_0, \ldots, g_{4k-1}]$.
Because our one-hot encoding expanded the length of the vector by a factor of 4, we will be using a stride of 4 instead of 1---we eschew tensor notation for the sake of simplicity.
Then, the first convolutional\footnote{We follow the common machine learning convention whereby convolution indices are not reversed; apologies for any confusion to those from outside of ML, but it makes no difference because the indices of $\mathbf{g}$ are randomly initialized.} layer is the function $L_1: \{0, 1\}^{4l} \to \mathbb{R}^{l-k}$ given by
\begin{equation}
L_1(\mathbf{e})_i = (\mathbf{e} * \mathbf{g})_{4i} = \sum_{j=1}^{4k} e_{4i+j} g_j  \textrm{ for } i \in \{0, \ldots, l-k-1 \}
\end{equation}

The second layer of of the CNN is the max-pooling function, given by $L_2: \mathbb{R}^{l-k} \to \mathbb{R}^{l-k-w}$ as
\begin{equation}
    L_2(y)_i = \max_{j \in \{0, \ldots, w-1\}} y_{i+j} \textrm{ for } i \in \{0, \ldots, l-k-w-1\}.
\end{equation}
After the max-pooling layer, the remainder of the neural network is trained to use the outputted features for the designated classification or prediction task.

It should be apparent that the setup for convolutional filters closely resembles that of the setup for minimizers, where the `hash function' is a dot product with the weights vector.
In both cases, some function is applied to k-mers in the string on a rolling basis.
Then, we keep the extreme value of the output of that function, whether it is a minimum or maximum, and pass that onto further analytical pipelines.
We address below why convolutional filters work on categorical sequences, when the usual story told is about robustness to non-categorical (ordinal/interval/ratio) noise in feature selection.

\section{Proofs}
Recall that the vector-multiply-shift family of hash functions defined by dot product with a uniform random ring element, following by division without remainder, is 2-independent.
Roughly speaking, our main claim is that the family of hash functions defined by dot product with a multivariate spherical Gaussian has the property that equally distinct k-mers are equally likely to be selected as minimizers/maximizers, and more distinct k-mers are more likely to be selected as minimizers/maximizers, but the set of minimizers/maximizers selected tends to be similar to each other.

To make this rigorous, we first need to know what we mean by the `distinctness' of a k-mer.
We define that by the total (summed) Hamming distance from all the other unique k-mers in a set; this is basically the absolute deviation of the k-mer from the set.
We will call this the `degree' of the k-mer in the set---consider a graph with k-mers as vertices and edge weights corresponding to the Hamming distance.
Our first goal is to show that the degree of a k-mer determines it's likelihood of being either the minimum or maximum under the Gaussian convolution hash family.
Later we will also show that extrema also correlated in composition, so the overall set of selected k-mers is similar to each other.

Because we are considering convolution with a multivariate spherical Gaussian, we can take advantage of a large body of literature on manipulating i.i.d. Gaussians.
However, we must first rewrite a k-mer in a one-hot encoding so that convolution with a multivariate Gaussian is well-defined (otherwise, what does it mean to multiply a nucleotide ``C(ytosine)" with a real number).
Given these preliminaries, we are now able to state our theorem.

\begin{theorem}
Consider a set $S$ of $n$ binary vectors in $\{0, 1\}^d$, all with the same number $m$ of set bits.
Define the degree $\Delta(\mathbf{x})$ of $\mathbf{x} \in S$ by
\begin{equation} 
\Delta(\mathbf{x}) = \sum_{s \in S} ||x - s||_1
\end{equation}

Let $\mathbf{g}$ be a spherical multivariate Gaussian random variable of dimension $d$ (each entry $g_i \sim \mathcal{N}(0,1)$ i.i.d.), defining a hash function $h: S \to \mathbb{R}$ by $h(\mathbf{x})=\mathbf{x} \cdot \mathbf{g}$.

Then for all $\Delta(\mathbf{x}) \ge \Delta(\mathbf{y})$
\begin{equation}
    \Pr \left(h(\mathbf{x}) = \max_{\mathbf{s} \in S} h(\mathbf{s})\right) \ge \Pr \left(h(\mathbf{y}) = \max_{\mathbf{s} \in S} h(\mathbf{s})\right) .
\end{equation}
\label{thm:math}
\end{theorem}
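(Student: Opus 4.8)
The plan is to pass to the jointly Gaussian vector of hash values and reduce the statement to a monotonicity along a single scalar direction. Since each $\mathbf{s}$ has exactly $m$ set bits, $\|\mathbf{s}\|_2^2=m$, so the values $h(\mathbf{s})=\mathbf{s}\cdot\mathbf{g}$ are centered and jointly Gaussian with common variance $m$ and $\mathrm{Cov}(h(\mathbf{x}),h(\mathbf{y}))=\mathbf{x}\cdot\mathbf{y}$; moreover $\|\mathbf{x}-\mathbf{s}\|_1=2(m-\mathbf{x}\cdot\mathbf{s})$, so the degree satisfies $\Delta(\mathbf{x})=2nm-2\,\mathbf{x}\cdot\bar{\mathbf{S}}$ with $\bar{\mathbf{S}}=\sum_{s\in S}\mathbf{s}$. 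Hence $\Delta(\mathbf{x})\ge\Delta(\mathbf{y})$ is exactly $(\mathbf{x}-\mathbf{y})\cdot\bar{\mathbf{S}}\le 0$, and it suffices to show that $\Pr(h(\mathbf{x})=\max_S h)$ is nonincreasing in the centroid projection $\mathbf{x}\cdot\bar{\mathbf{S}}$. Writing ``$\mathbf{x}$ is the maximum'' as the event $\{(\mathbf{x}-\mathbf{s})\cdot\mathbf{g}\ge 0 \text{ for all } s\in S\}$ makes clear that only the geometry of $S$ relative to $\mathbf{x}$ matters.

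The key device is a reflection. Let $\boldsymbol{\delta}=\mathbf{x}-\mathbf{y}$ and let $\sigma$ be the orthogonal reflection across the hyperplane $\boldsymbol{\delta}^{\perp}$, i.e. the perpendicular bisector of $\mathbf{x}$ and $\mathbf{y}$. Because $\sigma$ is orthogonal it preserves the law of the spherical Gaussian $\mathbf{g}$, and because it is the perpendicular bisector it swaps $\mathbf{x}\leftrightarrow\mathbf{y}$ while acting on every other vector by negating its $\boldsymbol{\delta}$-component and fixing its component in $\boldsymbol{\delta}^{\perp}$. The change of variables $\mathbf{g}\mapsto\sigma\mathbf{g}$ then yields $\Pr(\mathbf{y}=\max_S)=\Pr(\mathbf{x}=\max_{\sigma S})$, where $\sigma S$ agrees with $S$ on $\{\mathbf{x},\mathbf{y}\}$ and replaces each competitor $\mathbf{s}$ by $\sigma\mathbf{s}$. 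Thus the theorem reduces to comparing $\Pr(\mathbf{x}=\max)$ for two configurations that differ only by negating the $\boldsymbol{\delta}$-coordinates of the competitors $R=S\setminus\{\mathbf{x},\mathbf{y}\}$.

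To expose the relevant scalar I would condition on the component $\mathbf{g}_{\perp}$ of $\mathbf{g}$ orthogonal to $\boldsymbol{\delta}$ and retain only $g_{\parallel}=\mathbf{g}\cdot\hat{\boldsymbol{\delta}}$. Since $\mathbf{x}-\mathbf{y}$ is parallel to $\boldsymbol{\delta}$, the points $\mathbf{x}$ and $\mathbf{y}$ have identical perpendicular parts and opposite slopes $\pm\tfrac12\|\boldsymbol{\delta}\|$ along $\hat{\boldsymbol{\delta}}$, while each competitor contributes a slope $c_s=\mathbf{s}\cdot\hat{\boldsymbol{\delta}}$. Conditionally, the event that $\mathbf{x}$ is the maximum becomes a system of one-sided linear inequalities in the single variable $g_{\parallel}\ge 0$, i.e. a Gaussian interval probability, and likewise for $\mathbf{y}$; a short computation identifies the degree hypothesis $(\mathbf{x}-\mathbf{y})\cdot\bar{\mathbf{S}}\le 0$ with the clean condition $\sum_{s\in R}c_s\le 0$ (using $\boldsymbol{\delta}\cdot(\mathbf{x}+\mathbf{y})=0$). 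The reflection step, meanwhile, simply flips every $c_s$ to $-c_s$.

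It remains to show that of the two slope vectors $\mathbf{c}=(c_s)$ and $-\mathbf{c}$, the one with $\sum_s c_s\le 0$ gives $\mathbf{x}$ the larger probability of being the maximum. The natural tool is a Slepian-type monotonicity: increasing a single competitor's slope $c_s$ only tightens the constraint $(\mathbf{x}-\mathbf{s})\cdot\mathbf{g}\ge 0$ on $\mathbf{x}$'s winning region, so $\Pr(\mathbf{x}=\max)$ is nonincreasing in each $c_s$. Interpolating linearly $c_s(t)=(1-2t)c_s$ from $\mathbf{c}$ to $-\mathbf{c}$ gives $\tfrac{d}{dt}\Pr(\mathbf{x}=\max)=2\sum_s c_s\,(-\partial_{c_s}\Pr)$ with each weight $-\partial_{c_s}\Pr\ge 0$. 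Here lies the main obstacle: the per-competitor sensitivities $\partial_{c_s}\Pr$ are generally unequal, so the hypothesis $\sum_s c_s\le 0$ does not by itself sign the weighted sum. Closing this gap is the crux, and I would attack it by symmetrizing over $\mathbf{g}_{\perp}$ or by a Plackett/Price-type identity that rewrites $\partial_{c_s}\Pr$ against a common nonnegative kernel, exploiting the sphere constraint $\|\mathbf{s}\|_2=\sqrt{m}$ to keep the sensitivities comparable; the equivalent geometric statement, that the more isolated point owns the larger spherical Voronoi cell of directions, is a useful sanity check throughout.
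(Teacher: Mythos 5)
Your reductions are sound as far as they go: the identity $\Delta(\mathbf{x})=2nm-2\,\mathbf{x}\cdot\bar{\mathbf{S}}$ is correct, the reflection across $\boldsymbol{\delta}^{\perp}$ does swap $\mathbf{x}$ and $\mathbf{y}$ precisely because $\|\mathbf{x}\|_2^2=\|\mathbf{y}\|_2^2=m$ forces $\boldsymbol{\delta}\cdot(\mathbf{x}+\mathbf{y})=0$, and conditioning on $\mathbf{g}_{\perp}$ does reduce each winning event to an interval probability in $g_{\parallel}$. But the proof stops exactly at the decisive step, and you say so yourself: you need that flipping all competitor slopes $c_s\mapsto -c_s$ cannot increase $\mathbf{x}$'s win probability when $\sum_{s\in R}c_s\le 0$, and your Slepian-style interpolation yields $\frac{d}{dt}\Pr=2\sum_s c_s\,(-\partial_{c_s}\Pr)$ with unequal nonnegative weights, so the hypothesis (which controls only the \emph{unweighted} sum of the $c_s$) does not sign the derivative. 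This is not a technical loose end; it is essentially the full content of the theorem restated (note that with a single competitor the win probability is identically $1/2$ regardless of $c_s$, so any per-competitor monotonicity is delicate at best), and neither suggested repair --- symmetrizing over $\mathbf{g}_{\perp}$ or a Plackett/Price identity --- is shown to produce a common kernel. A further structural worry: both the interpolation $c_s(t)=(1-2t)c_s$ and the reflected competitors $\sigma\mathbf{s}$ leave the binary hypercube (and the sphere), so by this point you have discarded the 0/1 structure the theorem hands you, and it is unclear the comparison even holds for general equal-norm point sets.

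The paper's proof avoids this geometry entirely and leans on exactly the binary structure you abandoned. It first proves that the degree equals an expected squared deviation, $\Delta_i=\mathbb{E}\sum_{j}(Y_i-Y_j)^2$, then studies $Z=\sum_j(Y_1-Y_j)^2-\sum_j(Y_2-Y_j)^2$, which expands over the coordinates of $\mathbf{g}$ as $\sum_i c_i g_i^2+\sum_{i<j}c_{i,j}g_ig_j$ with \emph{integer} coefficients. The cross terms $g_ig_j=\frac{1}{4}(g_i+g_j)^2-\frac{1}{4}(g_i-g_j)^2$ are symmetric about the origin, and matched positive/negative $\chi^2$ pairs likewise cancel into origin-symmetric (variance-gamma) terms; since $\mathbb{E}Z=\Delta_1-\Delta_2\ge 0$, what remains is a nonnegative surplus of $\chi^2$ terms, giving $\Pr(Z\ge 0)\ge\frac{1}{2}$, after which a symmetry argument converts largest squared deviation into equal probability of being the min or the max. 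The pairing-and-cancellation step is exactly what substitutes for the weighted-sum control you could not obtain. Verdict: genuine gap --- your reflection framework correctly repackages the statement, but the unproven slope-flip comparison is the theorem itself, and closing it along your route would require a substantively new idea rather than a routine completion.
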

\begin{proof}
\vspace{-1em}
The naive approach would be to try to directly prove that $h(\mathbf{x})$ is likely to be greater than $h(\mathbf{y})$.
Unfortunately, this approach fails: because Gaussians are symmetric about 0, $\mathbb{E}(\mathbf{x}\cdot \mathbf{g}-\mathbf{y}\cdot \mathbf{g}) =(\mathbf{x}-\mathbf{y})\cdot\mathbb{E}\mathbf{g} = 0$.
Instead, in this proof we will focus on variances.

Let's order all the elements of $S$ and place $\mathbf{x}$ and $\mathbf{y}$ as the first two elements to make notation easier.
That is to say, without loss of generality, let $S = \{ \mathbf{x} = \mathbf{s}_1, \mathbf{y} = \mathbf{s}_2, \mathbf{s}_3, \ldots, \mathbf{s}_n \}$.
Let the random variable $Y_i = h(\mathbf{s}_i)$, and let $\Delta_i = \Delta(\mathbf{s}_i)$, the degree.

First, we're going to relate the degree of an element to the square of its hash's deviation from all the other hashed values.

\begin{lemma}
$\Delta_i = \mathbb{E} \sum_{j = 1}^n (Y_i - Y_j)^2 $
\label{lemma:degree}
\end{lemma}
\begin{proof}
Each $Y_i$ is a sum of 1D Gaussians from $\mathbf{g}$, so $Y_i - Y_j$ precisely cancels out any shared terms, leaving us with positive copies of the Gaussians only in $Y_i$, and negative copies of the Gaussians only in $Y_j$.
Because all our binary vectors had the same number of set bits, the total number of both positive and negative Gaussians that are summed to form $Y_i - Y_j$ is precisely the Hamming distance $||\mathbf{s}_i - \mathbf{s}_j||_1$.
Since we canceled out any shared terms, all the remaining Gaussians are i.i.d., and due to the symmetry of a Gaussian under reflection, that means that we have that many independent Gaussians.
The sum of $t$ independent $\mathcal{N}(0,1)$ Gaussians is just another Gaussian with distribution $\mathcal{N}(0,t)$ (i.e. a variance-$t$ Gaussian), and the second moment of a Gaussian is just its variance.
Thus, $\mathbb{E} (Y_i - Y_j)^2 = ||\mathbf{s}_i - \mathbf{s}_j||_1$, and the lemma follows by linearity of expectation.
\rev{\qed}\end{proof}

Interpreting Lemma \ref{lemma:degree}, the degree $\Delta_i$ gives the expected squared deviation after the dot product.
Arguing solely from expectations, we expect the lowest degree items to have the lowest squared deviation after the dot product, and therefore be closest to the mean.
Conversely, high degrees correspond to being further away from the mean, and the set member that has the highest squared deviation from the mean has to be either the min or the max.
The joint distribution is symmetric about the origin, so the probability of being the min or the max are equal.
Thus, we need only show that with probability at least 0.5, $Y_1$ has as high of a squared deviation as $Y_2$.

We now define a new random variable by the difference of the squared deviations
\begin{equation}
Z = \sum_{ j =1}^n (Y_1 - Y_j)^2 - \sum_{ j =1}^n  (Y_2 - Y_j)^2 .
\end{equation}
When $Z\ge 0$, $Y_1$ has at least as high of squared deviation as $Y_2$, and vice versa when $Z \le 0$.
Of course, $\mathbb{E}Z = \Delta_1 - \Delta_2 \ge 0$, but expectation is insufficient for showing that with probability 0.5, $Z \ge 0$.
That is thus the goal for the remainder of this proof.

As in the proof of Lemma \ref{lemma:degree}, we use the fact that the $Y_i$'s arise as sums of selected Gaussians from $\mathbf{g} = \{g_1, \ldots, g_d\}$.
Let $t = ||\mathbf{s}_i - \mathbf{s}_j||_1$, which is even because all the bit vectors have the same number of set bits.
Then $(Y_i - Y_j)$ is the sum of $t/2$ entries from $\mathbf{g}$ minus the sum of another \textit{distinct} $t/2$ entries from $\mathbf{g}$, which we can write as 
\begin{equation}
    (Y_i - Y_j) = g_{\alpha_1} + \cdots + g_{\alpha_{t/2}} - g_{\alpha_{t/2+1}} - \cdots - g_{\alpha_{t}}
\end{equation}
Then $(Y_i - Y_j)^2$ is the sum of $t$ squared Gaussians and all of the cross terms, which are either positive or negative copies of products of two independent Gaussians (for notational simplicity, we did not bother writing out which copies are positive or negative):
\begin{equation}
    (Y_i - Y_j)^2 = \sum_{u=1}^{t} g_{\alpha_u}^2 + \sum_{u \ne v} \pm g_{\alpha_u} g_{\alpha_v}
\end{equation}

Summing everything together, we can rewrite
\begin{align}
    Z &= \sum_{i=1}^d c_i g_i^2 + \sum_{i=1}^d \sum_{j=i+1}^d c_{i, j} g_i g_j ,
    \label{eq:u-int}
\end{align}
where the $c_i$'s and $c_{i,j}$'s are unknown positive or negative integer coefficients.
Note that we get nontrivial coefficients because although within $(Y_i - Y_j)$ all the component Gaussians are unique, this is no longer true when summing everything together.

The signs of the $c_{i,j}$'s do not matter.
For i.i.d. Gaussians $g_i$ and $g_j$, we can rewrite 
\begin{equation}
    g_i g_j = \frac{1}{4} (g_i + g_j)^2 - \frac{1}{4} (g_i - g_j)^2,
\end{equation} which is just the difference of two independent parameter-1 $\chi^2$ variables with the same distribution.
The difference of two mutually independent random variables of the same distribution is always symmetric about the origin---in this case, the difference of two independent $\chi^2$ variables gives a variance-gamma distribution \cite{klar2015note}.
Thus, the right term $\sum_{i=1}^d \sum_{j=i+1}^d c_{i, j} g_i g_j$ of equation \ref{eq:u-int} is symmetric about the origin.

Additionally, using $\mathbb{E} g_i^2 = 1$ and $\mathbb{E} g_i g_j = 0$, that implies that $\mathbb{E}\sum_{i=1}^d c_i g_i^2 = \mathbb{E}Z = \Delta_1 - \Delta_2 \ge 0$.
The summation $\sum_{i=1}^d c_i g_i^2$ can be represented as a sum of a bunch of positive and negative copies of parameter-1 $\chi^2$ variables of the same distribution---we have $c_i$ copies $g_i^2$, which is a parameter-1 $\chi^2$ variable---with at least as many positive as negative copies.
Because each negative $\chi^2$ variable is pairwise independent of every positive variable, we can pair each negative copy with an independent positive copy of the same distribution.
Using the same logic as above, those pairs when summed are symmetric about the origin (and again variance-gamma distributions).

The only asymmetric part of $Z$ is therefore the leftover $\chi^2$ terms (if any), which are strictly positive.
Because $Z$ is the sum of something symmetric about the origin and a strictly positive component, $\Pr(Z \ge 0) \ge 0.5$, concluding the proof.

\rev{\qed}\end{proof}

\begin{theorem}
Consider the universe $U$ of all $n$ binary vectors in $\{0, 1\}^d$ with $m$ set bits.
Let $\mathbf{g}$ be a spherical multivariate Gaussian random variable of dimension $d$ (each entry $g_i \sim \mathcal{N}(0,1)$ i.i.d.), defining a hash function $h: U \to \mathbb{R}$ by $h(\mathbf{x})=\mathbf{x} \cdot \mathbf{g}$. Let $\hat{\mathbf{s}}$ be a (universal) maximizer of $U$ under $h$, defined by $h(\hat{\mathbf{s}}) = \max_{\mathbf{s} \in S} h(\mathbf{s}) \equiv M$.
Then for any $\mathbf{x}, \mathbf{y} \in U$ such that $||\mathbf{x}-\hat{\mathbf{s}}||_1 < ||\mathbf{y} - \hat{\mathbf{s}}||_1$,
    $\mathbb{E} h(\mathbf{x}) > \mathbb{E} h(\mathbf{y})$, the conditional expectations given that $\hat{\mathbf{s}}$ is a maximizer.
\label{thm:math2}
\end{theorem}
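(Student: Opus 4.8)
The plan is to exploit the fact that conditioning on $\hat{\mathbf{s}}$ being the universal maximizer is exactly conditioning on an order-statistics event, and then to combine the permutation symmetry of $\mathbf{g}$ with a decomposition of $h$ according to shared set bits. First I would fix notation: write $A = \{i : \hat{s}_i = 1\}$ for the support of $\hat{\mathbf{s}}$ and $B_{\mathbf{x}} = \{i : x_i = 1\}$ for each $\mathbf{x} \in U$, so that $h(\mathbf{x}) = \sum_{i \in B_{\mathbf{x}}} g_i$ and $|A| = |B_{\mathbf{x}}| = m$. Since the $m$-subset of coordinates with the largest sum is precisely the set of the $m$ largest entries of $\mathbf{g}$, the conditioning event $E = \{h(\hat{\mathbf{s}}) = M\}$ coincides, up to a probability-zero set of ties, with the order-statistics event $E = \{\min_{i \in A} g_i > \max_{j \notin A} g_j\}$. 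This reformulation is the key move: it turns a statement about a combinatorial maximum over $\binom{d}{m}$ competitors into a simple statement about the coordinates of $\mathbf{g}$.

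Next I would establish the symmetry I need. The event $E$ is invariant under any permutation of the coordinates that fixes $A$ setwise, and $\mathbf{g}$ has i.i.d.\ (hence exchangeable) entries, so the conditional law of $\mathbf{g}$ given $E$ is invariant under such permutations; in particular $\{g_i\}_{i \in A}$ are exchangeable given $E$ and likewise $\{g_j\}_{j \notin A}$ are exchangeable given $E$. I can therefore define two constants $\mu_A = \mathbb{E}[g_i \mid E]$ (the same for every $i \in A$) and $\mu_{A^c} = \mathbb{E}[g_j \mid E]$ (the same for every $j \notin A$); both are finite because $\Pr(E) = 1/n > 0$. Decomposing by shared and unshared set bits and using linearity of conditional expectation gives
\[
\mathbb{E}[h(\mathbf{x}) \mid E] = |B_{\mathbf{x}} \cap A|\,\mu_A + |B_{\mathbf{x}} \setminus A|\,\mu_{A^c} = m\,\mu_{A^c} + |B_{\mathbf{x}} \cap A|\,(\mu_A - \mu_{A^c}),
\]
and since $\|\mathbf{x} - \hat{\mathbf{s}}\|_1 = 2(m - |B_{\mathbf{x}} \cap A|)$, the hypothesis $\|\mathbf{x} - \hat{\mathbf{s}}\|_1 < \|\mathbf{y} - \hat{\mathbf{s}}\|_1$ is equivalent to $|B_{\mathbf{x}} \cap A| > |B_{\mathbf{y}} \cap A|$. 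Subtracting the two conditional expectations leaves $\mathbb{E}[h(\mathbf{x}) \mid E] - \mathbb{E}[h(\mathbf{y}) \mid E] = (|B_{\mathbf{x}} \cap A| - |B_{\mathbf{y}} \cap A|)(\mu_A - \mu_{A^c})$, where the first factor is a positive integer.

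The remaining and genuinely load-bearing step is to show $\mu_A > \mu_{A^c}$, and this is where the order-statistics reformulation pays off. Fix any $i \in A$ and $j \notin A$; on the event $E$ we have $g_i \ge \min_{k \in A} g_k > \max_{k \notin A} g_k \ge g_j$, so $g_i - g_j > 0$ almost surely conditional on $E$, and because $g_i - g_j$ is absolutely continuous and nondegenerate this yields the strict inequality $\mu_A - \mu_{A^c} = \mathbb{E}[g_i - g_j \mid E] > 0$. Combining this with the displayed difference completes the argument. I expect the main subtlety to lie not in any hard computation but in pinning down the two foundational claims carefully---that the maximizing-$m$-subset event coincides with the order-statistics event (handling the measure-zero ties) and that the permutation symmetry genuinely yields coordinate-wise exchangeability within $A$ and within $A^c$; once those are secured, the strict ordering $\mu_A > \mu_{A^c}$, and hence the theorem, follow directly.
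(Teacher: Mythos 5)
Your proposal is correct and follows essentially the same route as the paper's proof: condition on the maximizing event, use permutation symmetry to reduce everything to two conditional coordinate means (one for indices in the support of $\hat{\mathbf{s}}$, one for the rest), decompose $\mathbb{E}[h(\mathbf{x})\mid E]$ over shared versus unshared set bits, and conclude via $\|\mathbf{x}-\hat{\mathbf{s}}\|_1 = 2\left(m - |B_{\mathbf{x}}\cap A|\right)$. Your identification of the conditioning event with the order-statistics event $\{\min_{i\in A} g_i > \max_{j\notin A} g_j\}$ (valid precisely because $U$ is the \emph{full} universe of $m$-subsets) supplies a pointwise, fully rigorous justification of the strict gap $\mu_A > \mu_{A^c}$, where the paper only offers the informal remark that ``otherwise the unique maximizer would not be $\hat{\mathbf{s}}$,'' and your computation also avoids the paper's slip of writing $n$ where $m$ is meant in the displayed formula for $\mathbb{E}h(\mathbf{x})$.
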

\begin{proof}
The proof is elementary; intuition is being correlated with a maximizer makes you larger on average.

First note that the event that there are two distinct maximizers has measure zero, so in this proof we assume $\hat{\mathbf{s}}$ to be unique.
The theorem follows as a straight-forward consequence of linearity of expectation, after conditioning on the maximizer $\hat{s}$---for ease of notation, all expectations in this section are conditional expectations.
Consider the conditional expectations of the entries $g_i$.
Let's abuse notation and treat bit vectors as sets of indicies; for example, to say that $i \in \mathbf{x}$ if $\mathbf{x}_i=1$. Then we know two facts:
\begin{enumerate}
    \item $\mathbb{E} (g_i | i \in \hat{\mathbf{s}}) = \frac{M}{m}$, because we have no information to break the symmetry among set bits of $\hat{\mathbf{s}}$.
    \item $\mathbb{E} (g_i | i \not \in \hat{\mathbf{s}}) < \frac{M}{m}$, because otherwise the unique maximizer would not be $\hat{\mathbf{s}}$.
\end{enumerate}
Then we can explicitly compute $\mathbb{E}h(\mathbf{x})$  (or respectively $\mathbb{E}h(\mathbf{y})$) as follows:
\begin{align*}
    \mathbb{E}h(\mathbf{x}) &= \mathbb{E} \left[ \sum_{i \in \mathbf{x}} g_i \right] = \mathbb{E} \left[ \sum_{i \in \mathbf{x} \cap \hat{\mathbf{s}}} g_i + \sum_{i \in \mathbf{x} \smallsetminus \hat{\mathbf{s}}} g_i \right] = \left( n - \frac{1}{2}||\mathbf{x} - \hat{\mathbf{s}}||_1 \right) \frac{M}{m} + \frac{1}{2}||\mathbf{x} - \hat{\mathbf{s}}||_1 \mathbb{E} (g_i | i \not \in \hat{\mathbf{s}})
\end{align*}
Then subtracting and substituting in the distance assumption, we get
\begin{align*}
\mathbb{E}h(\mathbf{x}) - \mathbb{E}h(\mathbf{y}) &= \left(  \frac{1}{2}||\mathbf{y} - \hat{\mathbf{s}}||_1 - \frac{1}{2}||\mathbf{x} - \hat{\mathbf{s}}||_1 \right) \frac{M}{m} + \frac{1}{2}\left( ||\mathbf{x} - \hat{\mathbf{s}}||_1 - ||\mathbf{y} - \hat{\mathbf{s}}||_1 \right) \mathbb{E} (g_i | i \not \in \hat{\mathbf{s}}) > 0 ,
\end{align*}
which completes the proof.
\rev{\qed}\end{proof}

We can now directly apply Theorems \ref{thm:math} and \ref{thm:math2} to get Theorem \ref{thm:main}, which we state in full generality.

\begin{theorem}
\rev{Consider a CNN acting on one-hot encoded categorical vectors with a single convolutional filter following by max-pooling, both of stride-1. Then the output of the max-pooling layer is precisely equivalent to choosing a minimizer over the windows/patches that are max-pooled. If the weights of the filter are initialized as a spherical Gaussian multivariate, then the implied minimizer hash function is mostly random but has the following properties:
\begin{enumerate}
    \item It is at least as likely to choose the the most distinct categorical feature in a window as any other k-mer, where distinctness is measured in absolute Hamming deviation from the other features;
    \item And the set of minimizers it chooses overall are likely to be closer in Hamming distance to each other than a random set of k-mers.
\end{enumerate}}
\label{thm:main}
\end{theorem}
\begin{proof}
A one-hot encoding of a categorical vector precisely gives binary vectors with the same number of set bits.
The setup follows directly from the definitions set up in Sections and \ref{ss:minimizers} and \ref{ss:cnns} and illustrated in Figure \ref{fig:cnn}: convolution with a randomly initialized spherical Gaussian is clearly \textit{some} kind of hash function, vector-mapped across positions in the sequence (though perhaps not one with the standard properties), and a max-pooling operation is equivalent to taking a minimum across windows.
However, we proved in Theorem \ref{thm:math} that a dot-product hash function with spherical Gaussian parameters preferentially selects more distinct k-mers, but is equally likely to select k-mers that are equally distinct.
The hash function so defined striates k-mer features by degree (Hamming distance from all other k-mers), and all k-mers of a given degree have the same probability of being chosen as a minimum or maximum, but at least as high a probability as any k-mer of lower degree, proving the first part of Theorem \ref{thm:main}.
The second part of Theorem \ref{thm:main} is a consequence of the Theorem \ref{thm:math2}, as we showed that the closer a k-mer is to the universal minimizer, the more likely it is to be selected as a local minimizer.
\rev{\qed}
\end{proof}

\section{Experimental Results}
\begin{figure}[t]
    \centering
	\includegraphics[height=0.33\columnwidth]{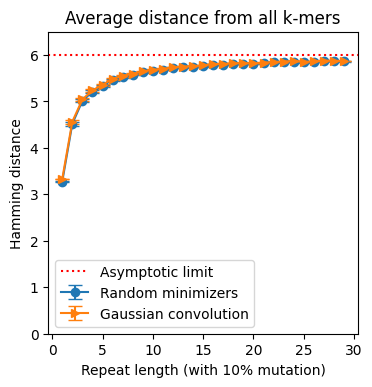}
	\hfill
	\includegraphics[height=0.33\columnwidth]{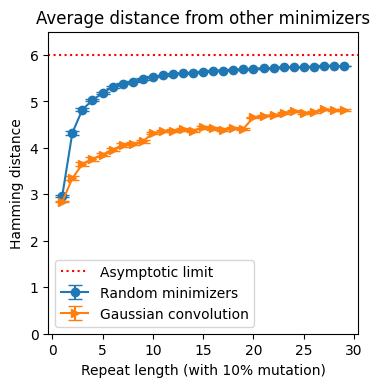}
	\hfill
	\includegraphics[height=0.33\columnwidth]{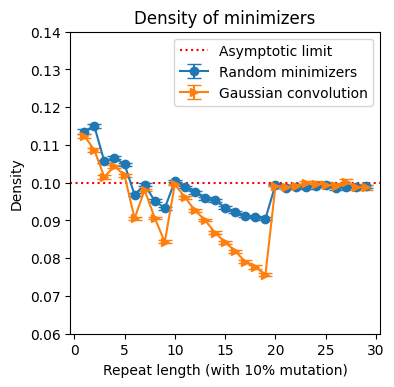}
		\vspace{-1.5em}
		\caption{We generated repetitive sequences of length 1007, with repeat length on the x-axis and a 10\% repeat mutation substitution rate. We used a k-mer size of 8 and a window size of 19, for an expected random minimizer density of 0.1 over the 1000 k-mers of the sequence. All data points are an average of 400 random runs, with standard error bars. \textbf{(left)} The average Hamming distance of the minimizers from all k-mers in the sequence is basically the same for random minimizers and Gaussian convolutions; this is consistent with Theorem \ref{thm:math}. \textbf{(center)} However, the average distance from the other minimizers is significantly lower for Gaussian convolutions, as predicted by Theorem \ref{thm:math2}. \textbf{(right)} On repetitive windows of a sequence, Gaussian convolutions tend to have lower density than random minimizers.}	
    \label{fig:exp}
    \vspace{-1.5em}
\end{figure}

While we have proven general properties of using Gaussian convolution as a minimizer selection scheme, our theorems do not say anything about in what situations these behaviors manifest. To address that, we turn to numerical simulation (Figure \ref{fig:exp}).
The correct interpretation of our theory is that k-mers which are further away (in Hamming distance) from other k-mers in a window, but similar to other minimizers, are more likely to be selected.
This phenomenon thus primarily occurs in repetitive sequences, where k-mers within a window are similar to each other.
We thus run our experiments on repetitive sequence with mutations, both by simulating tandem repeats with mutations, and measuring on real human telomeres.\footnote{Code for Figures \ref{fig:exp} and \ref{fig:telomere} are available at: \url{https://github.com/yunwilliamyu/gaussian-minimizer-density}}
Finally, we finish by designing a CNN architecture to learn a neural embedding of simulated short-reads that respects linear distance along a genome, the first step in building a genome assmebly graph.\footnote{Code and models for Figure \ref{euclidean-embeddings} available at: \href{https://storage.googleapis.com/progress-towards-assembly/colab.html}{https://storage.googleapis.com/ progress-towards-assembly/colab.html}}.

\subsection{Simulated tandem repeats}
We chose a k-mer size of 8 for our experiments, and a total sequence length of 1007 (so that we have 1000 k-mers).
We chose a window size of $w=19$, as the density of random minimizers is expected to be $2/(w+1) = 0.1$.
We then swept over repeat lengths from 1 to 30, with a 10\% mutation rate.
For example, if we chose a repeat length of 5, the simulation begins by randomly choosing 5 letters for the repeat, e.g. ``ACCCT''; then, this 5-letter sequence is repeated to fill up a sequence of length 1007, but with 10\% of the nucleotides randomly substituted.
This is thus a simplified model of a long tandem repeat with mutations.

We measured three different quantities: (1) average distance from all k-mers, (2) average distance from other minimizers, and (3) the density of the minimizers. The first two quantities correspond to empirical validation of our two technical theorems.
Random minimizers and Gaussian convolutions are on average the same distance from all k-mers, asymptotically approaching 6 because 8-mers with a 4-letter alphabet on average share 2 characters; importantly, Theorem \ref{thm:math} says that Gaussian convolutions should be on average far from other k-mers, and this turns out to match random minimizers.
However, Gaussian convolutions minimizers are on average much closer to each other than random minimizers are; this is due to Theorem \ref{thm:math2}.

One seeming consequence is that Gaussian convolution minimizers have lower density in repetitive windows than random minimizers do. Once the repeat length grows to the window size, the density effects disappear entirely, but we see a minimum density as low as $0.0756 = 1.512/(w+1)$ when the repeat length is one less than the window size.
It is true that random minimizers show a similar lowered density in repetitive windows, but the effect is much more pronounced for Gaussian convolution minimizers.
As a caveat, the decreased density is not alone a reason to choose Gaussian convolution minimizers though, as Gaussian convolution minimizer density only decreases for repetitive regions, whereas existing work such as Miniception allows a consistent lowered density of around $1.67/(w+1)$ over both repetitive and non-repetitive regions \cite{zheng2020improved}.
Instead, these experiments are meant simply to partially explain the performance of convolutional filters.

\subsection{Real human Telomeres}
We extracted the annotated telomeric regions from version 2.0 of the Telomere to Telomere consortion CHM13 project \cite{rhie2022complete}. Each chromosome has at least two annotated telomeric regions at the ends, and some chromosomes have an additional telomeric region; we labelled all telomeric regions in order of position on the genome (e.g. the first telomere on chromosome 3 is telomere 3a, the second is 3b, etc.). Then, using the same parameters (k-mer size of 8 and window size of 19, for an expected random minimizer density of 0.1 over the telomere), we compared the density of using random minimizers vs. Gaussian convolutions. These are highly repetitive regions of the genome.

We see in figure \ref{fig:telomere} that the performance on real repetitive data generally matches the simulated data: the Gaussian convolutions in expectation have lower density than random minimizers---occasionally the trend is the opposite direction, but we averaged over 400 random runs.
This behavior is very different from that of other k-mer selection schemes, like open sync-mers \cite{edgar2021syncmers}, which can sometimes exhibit much higher density in repetitive regions.

\begin{figure}[t]
    \centering
    \includegraphics[width=1\columnwidth]{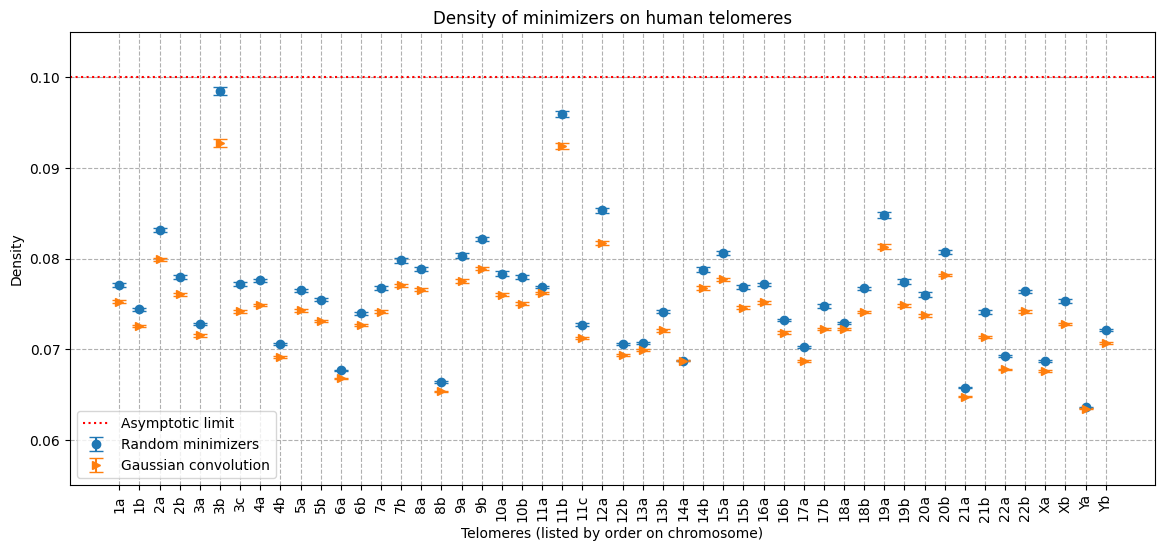}
    \caption{Density on human telomeres, using k-mer size of 8 and window size of 19, for an expected random minimizer density of 0.1. All data points are an average of 400 random runs, with standard error bars. The general trend on real data matches the simulated data: Gaussian convolutions have lower density than random minimizers. Note that the x-axis is annotated with the chromosome and telomere number.}
    \label{fig:telomere}
\end{figure}

\subsection{Genome assembly graph of SARS-CoV-2}
\begin{figure*}[tp]
\begin{center}
\centerline{\includegraphics[width=0.35\textwidth]{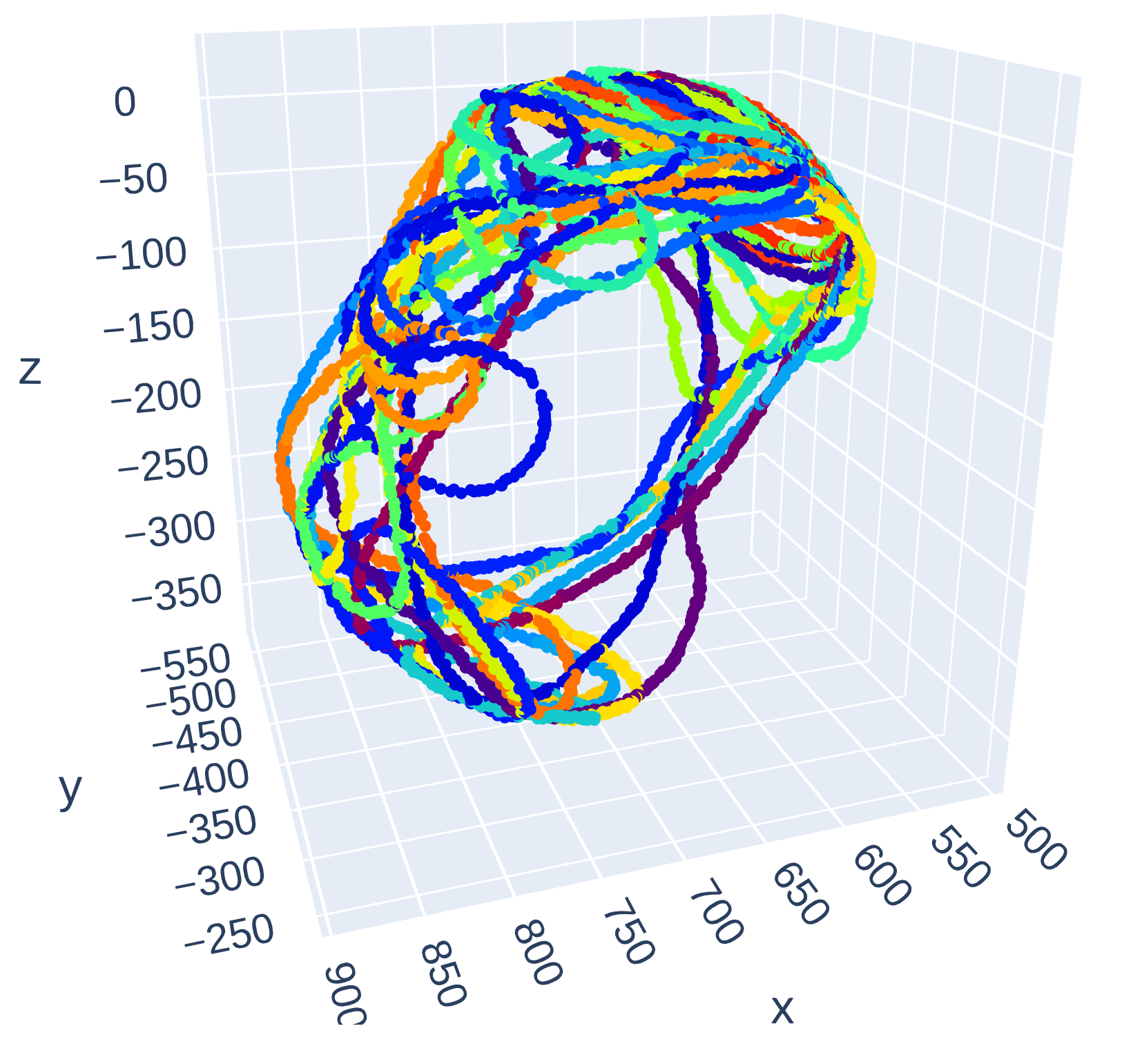} \includegraphics[width=0.6\textwidth]{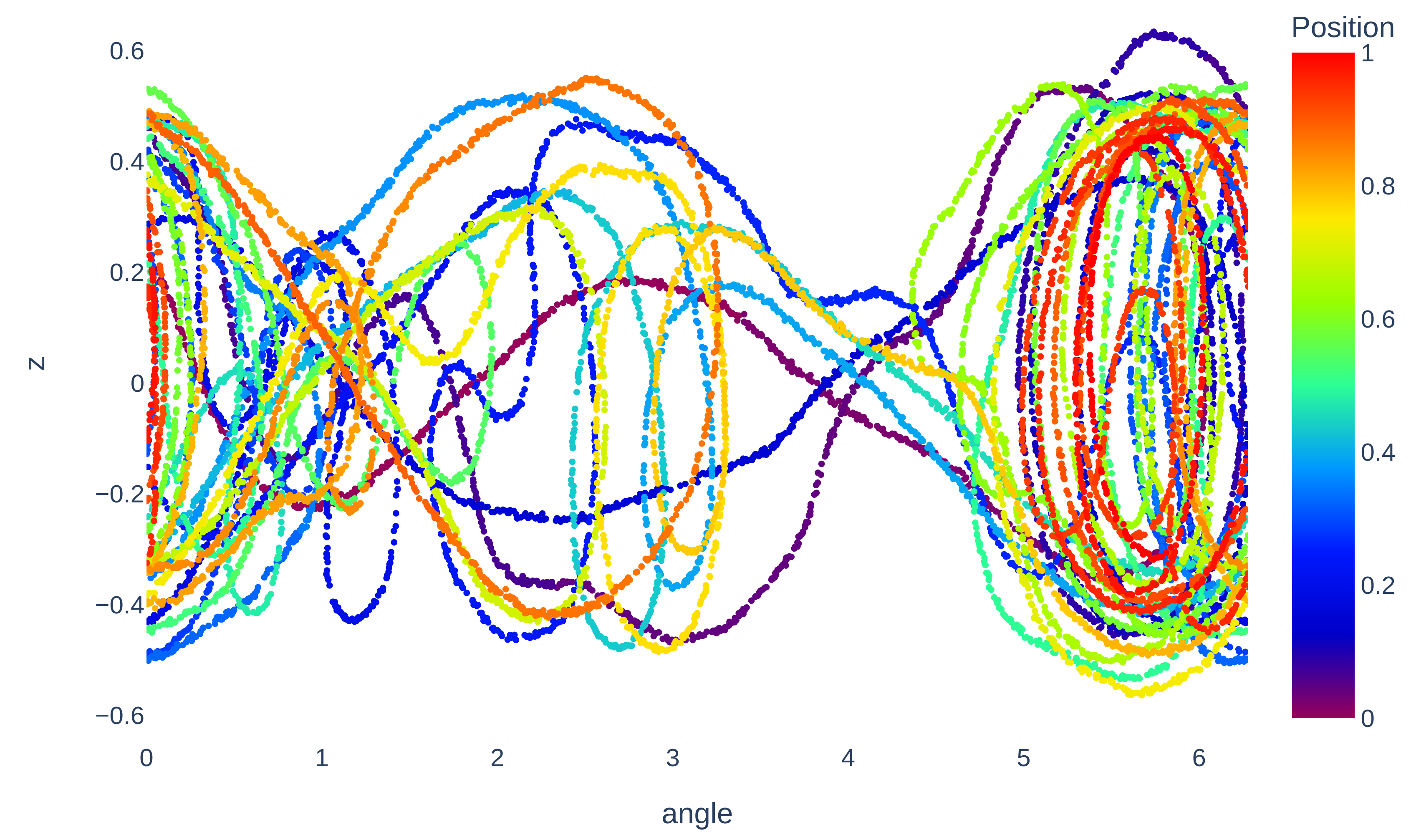}
}
\centerline{\includegraphics[width=0.5\textwidth]{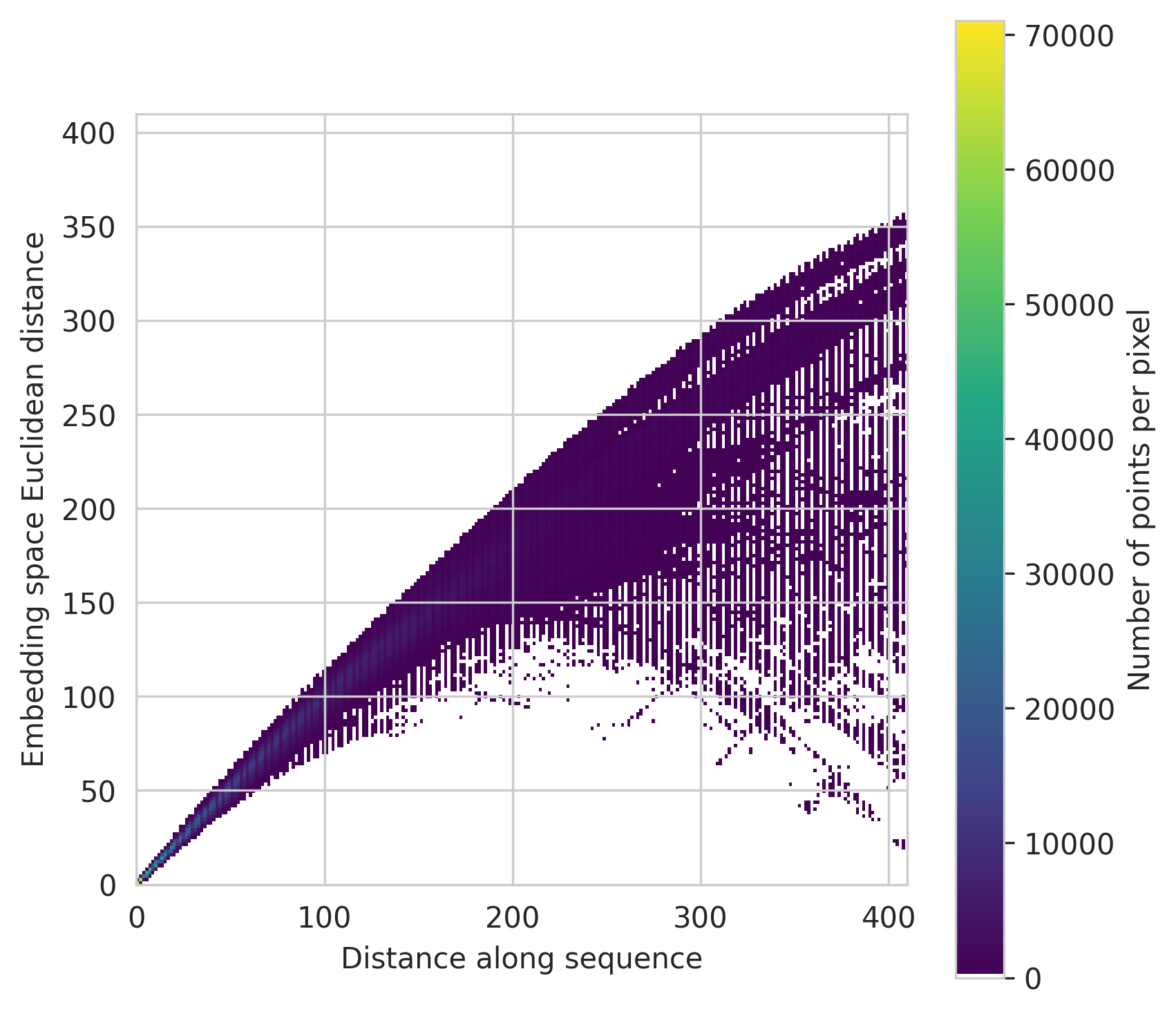} \includegraphics[width=0.5\textwidth]{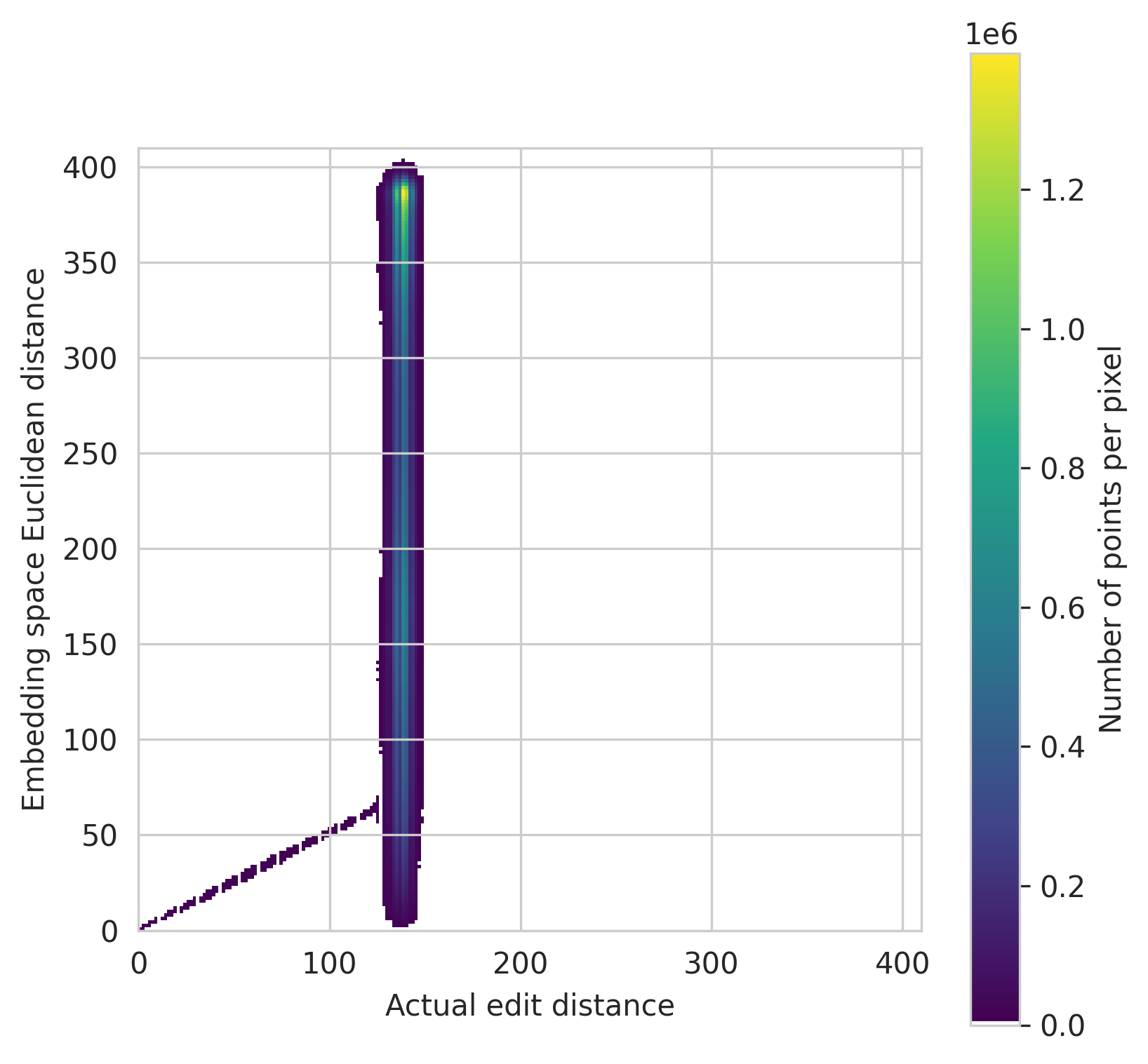}
}
\caption{Euclidean embedding of synthetic short-reads (length $l=256$) from the SARS-CoV-2 genome as a pseudo-read-overlap assembly graph. \\(\textbf{top left}) Actual embedding was into 3D, attempting to preserve local distances along the sequence. \\(\textbf{top right}) For ease of visualization, we further transformed the data onto a plane to better show the trajectories. Note that the embedding was not given any of the positional (\textbf{color}) information, but instead managed to recover that information implicitly from just the raw reads. Note that this is only the first step to building an assembly graph, and that it was substantially slower than existing software, but is only here as a visual proof of concept. \\(\textbf{bottom left}) Heat map comparing embedding distance with distance along sequence in low distance regime. The embedding space distance generally increases with distance along sequence, until the reads are too far apart along the sequence; note that the reads were only trained with a maximum distance of $l=256$, so it is not surprising that distant reads have closer embedded distances. \\(\textbf{bottom right}) Actual Levenshtein edit distances between reads of length $l=256$ obviously cannot exceed 256, and in practice for synthetic reads from the SARS-CoV-2 genome do not exceed 150. Note that edit distance corresponds roughly to twice the embedding distance in the small-distance regime; this is because an overlap distance of 1 requires two edits, an insertion and a deletion. Although it does occasionally happen that reads far apart in edit distance are close on the embedding, this is relatively rare. More importantly, if the distance along sequence or edit distance are small, then the the embedding distance is also small.}
\label{euclidean-embeddings}
\end{center}
\vspace{-1em}
\end{figure*}

We started with the SARS-CoV-2 genome \texttt{NC\_045512.2}, which is just under 30 kilobases long.
A corpus of synthetic short-reads of length 256bp were generated to a depth of coverage of 200.
The synthetic reads had an error rate of 1\%, evenly divided three-ways among insertions, deletions, and substitutions.
Reads with deletions were padded randomly to ensure constant length; insertions were uniformly chosen among the four nucleotides, and substitutions were uniformly chosen among the other three nucleotides.

Our architecture was a 1D-CNN designed to pick out k-mers for $k \in \{1, \ldots, 10\}$, with a total of 8218 total convolutional filters, with increasing numbers of filter for larger $k$. With windows and strides proportional to $k$, this resulted in 253824 neurons after max-pooling, followed up with eight fully-connected layers (size 512, 256, 128, 64, 32, 16, 8, and then 3) with GELU activations between fully-connected layers to reduce the dimensionality down to 3.
The total number of trainable parameters of the network was 130,437,175. We chose an embedding dimension of 3 for two reasons: (1) to visually inspect the results, and (2) because assembly graphs are known to be non-planar.

Then, for training, in each epoch, we randomly generate one `adjacent' read (a rotation followed by random substitutions) for each read in our corpus, and train the network to produce an embedding that recapitulates the rotation length, which we use as a proxy for training the network on edit distance, which takes quadratic time to compute.
Thus, the network learns embeddings for pairs of reads similar to our reads, without the danger of overfitting on any particular pairs.
For this experiment, we trained for 16000 epochs, though the training and validation loss both started plateauing at around 3000 epochs.

Figure \ref{euclidean-embeddings} shows that the neural network did indeed learn an embedding that respected the linear distance along the genome\rev{---i.e. reads that are neighbors in the original genome are still neighbors in the embedding}.
We should add though that this training took 27 hours on Google Colab with an A100 GPU to just get an embedding, without even a full assembly, as compared against minutes to do the same with a modern assembler, so this is highly impractical.
However, it does show that neural networks are able in principle to \textit{de novo} generate an almost assembly graph, \rev{by simply connecting each read with its nearby neighbors in the embedding. Additionally, note that for visualization purposes, we embedded into 2D/3D, which imposes a heavier information bottleneck, and that a practical deep learning assembler would likely use larger embedding dimension.}



\section{Discussion}

Minimizers and CNN features do not depend on more classical notions of uniform hashing.
Indeed, this idea has been independently explored for both minimizers and CNNs.
On the minimizer front, there are modifications to the hash function for better density \cite{zheng2020improved} or postprocessing the permutation using inverse document/genome frequency \cite{jain2020weighted}.
For CNNs, feature construction is largely trained, instead of designed, but there is also work specifically on architectures for promoting better features \cite{caron2018deep}.
Here we show that just initializing with Gaussian weights causes some correlation to distinctiveness, a proxy for inverse frequency related to density in repetitive regions.
Notably, clusters of similar data are mapped close together with a Gaussian convolution, which does not happen for random minimizers.
This is a known property of CNN filters when it comes to ordinal data, and we show it remains true for categorical data.
In biological terms, all k-mers matched by a spaced seed get mapped close to each other by a Gaussian convolution. 

There is recent work in the computational biology literature that has demonstrated it is possible to use deep neural networks to train better minimizer schemes \cite{hoang2022deepminimizer}, but that work treats minimizers as just another function to be learned.
This does highlight the primary limitation of our theory, which is that our analyses only hold at initialization time, and more sophisticated mathematical machinery is necessary to understand behavior after training.
We did follow up with an empirical exploration of applying and training CNNs for neural embeddings of reads for assembly graphs, but that was in many ways beyond the scope of what we actually proved.

Some of the other seeming limitations are related to generalizing our results beyond a single layer with a single convolutional filter and stride lengths of one, but these limitations are actually easily overcome.
Our theorems generalize to 2D (or 3D) filters, at only the cost of notational complexity.
Longer stride lengths simply amount to a sparsification of the redundancy found in neighboring minimizers; minimizer schemes sparsify through deduplication of the minimums, but that operation is not easily vectorized, so CNNs instead sparsify by using longer stride lengths.
Stride-length sparsification is less efficient space-wise than a full deduplication, but with high probability conveys the same information, provided that the strides are not too long (Figure \ref{fig:cnn}).
Also, multiple filters within a layer correspond to just taking multiple minimizers within each region;
Another reason why this may be helpful for CNNs is due to the fact that memorizing all the output values of a single neuron is possible for CNNs, but rather hard at training time; thus, breaking up the minimizer information across multiple filters likely makes it easier to train.
Furthermore, having multiple layers of convolutional filters effectively creates minimizer schemes that are able to learn more complicated hash functions than just Gaussian dot products.
These theoretical connections deserve further study.

In this manuscript, through a careful probability computation, we proved that the family of hash functions defined by Gaussian dot products is useful for minimizer style analysis.
By recasting CNN convolutional filters in a hash-function based framework, we were able to demonstrate their equivalence to minimizers, one of the workhorses of computational biology.
Furthermore, we validated our theory empirically, showing a lowered density in repetitive regions for Gaussian convolution minimizers, both in simulation, and on real human telomeres.
Lastly, we demonstrated that convolutional neural networks can learn neural embeddings for the construction of assembly graphs, though at present, it is still so slow as to be impractical.
We hope that the connection proves fruitful for both methods, enabling the design of better minimizers, as well as providing some mathematically rigorous justification for why CNNs work in categorical data space, and hinting at future hybrid algorithms taking advantage of both classical algorithms and deep learning.

\section{Acknowledgements}
I thank Daphne Ippolito, Jim Shaw, and David Rolnick for fruitful discussions, and acknowledge the support of the Natural Sciences and Engineering Research Council of Canada (NSERC), (grant RGPIN-2022-03074).

%
%
%
\bibliographystyle{splncs04}
\bibliography{mybib}

\end{document}